\newtheorem{assumption}{\textbf{H}\hspace{-3pt}}
\Crefname{assumption}{\textbf{H}\hspace{-3pt}}{\textbf{H}\hspace{-3pt}}
\crefname{assumption}{\textbf{H}}{\textbf{H}}
\newtheorem{proposition}{Proposition}
\newtheorem{remark}{Remark}
\DeclareMathOperator*{\argmax}{arg\,max}
\icmltitlerunning{Generalized Sliced Wasserstein Distances}
\def\msw{\text{max-$SW_p$}}
\def\mgsw{\text{max-$GSW_p$}}
\begin{document}

\twocolumn[
\icmltitle{Generalized Sliced Wasserstein Distances}





\begin{icmlauthorlist}
\icmlauthor{Soheil Kolouri}{hrl}
\icmlauthor{Kimia Nadjahi}{paris}
\icmlauthor{Umut \c{S}im\c{s}ekli}{paris}
\icmlauthor{Roland Badeau}{paris}
\icmlauthor{Gustavo K. Rohde}{uva}
\end{icmlauthorlist}

\icmlaffiliation{hrl}{HRL Laboratories, LLC., Malibu, CA, USA}
\icmlaffiliation{paris}{T\'{e}l\'{e}com ParisTech, Paris, France}
\icmlaffiliation{uva}{University of Virginia
Charlottesville, VA, USA}

\icmlcorrespondingauthor{Soheil Kolouri}{skolouri@hrl.com}

\icmlkeywords{Wasserstein distances, Slicing theorem}

\vskip 0.3in
]



\printAffiliationsAndNotice{}  

\begin{abstract}
The Wasserstein distance and its variations, e.g., the sliced-Wasserstein (SW) distance, have recently drawn attention from the machine learning community. The SW distance, specifically, was shown to have similar properties to the Wasserstein distance, while being much simpler to compute, and is therefore used in various applications including generative modeling and general supervised/unsupervised learning. In this paper, we first clarify the mathematical connection between the SW distance and the Radon transform. We then utilize the generalized Radon transform to define a new family of distances for probability measures, which we call generalized sliced-Wasserstein (GSW) distances. We also show that, similar to the SW distance, the GSW distance can be extended to a maximum GSW (max-GSW) distance. We then provide the conditions under which GSW and max-GSW distances are indeed distances. Finally, we compare the numerical performance of the proposed distances on several generative modeling tasks, including SW flows and SW auto-encoders. 
\end{abstract}

\section{Introduction}
\label{sec:intro}

The Wasserstein distance has its roots in optimal transport (OT) theory \cite{villani2008optimal} and forms a metric between two probability measures. It has attracted abundant attention in data sciences and machine learning due to its convenient theoretical properties and applications on many domains \cite{solomon2014wasserstein,frogner2015learning,montavon2016wasserstein,kolouri2017optimal, courty2017optimal, peyre2018computational,schmitz2018wasserstein}, especially in implicit generative modeling such as OT-based generative adversarial networks (GANs) and variational auto-encoders \cite{arjovsky2017wasserstein,bousquet2017optimal,gulrajani2017improved,tolstikhin2018wasserstein}. 


While OT brings new perspectives and principled ways to formalize problems, the OT-based methods usually suffer from high computational complexity. The Wasserstein distance is often the computational bottleneck and it turns out that evaluating it between multi-dimensional measures is numerically intractable in general. This important computational burden is a major limiting factor in the application of OT distances to large-scale data analysis. Recently, several numerical methods have been proposed to speed-up the evaluation of the Wasserstein distance. For instance, entropic regularization techniques \cite{cuturi2013sinkhorn,cuturi2015smootheddual,solomon2015convolutional} provide a fast approximation to the Wasserstein distance by regularizing the original OT problem with an entropy term. The linear OT approach, \cite{wang2013linear,kolouri2016continuous} further simplifies this computation for a given dataset by a linear approximation of pairwise distances with a functional defined on distances to a reference measure. Other notable contributions towards computational methods for OT include multi-scale and sparse approximation approaches \cite{oberman2015efficient,schmitzer2016sparse}, and Newton-based schemes for semi-discrete OT \cite{levy2015,kitagawa2016convergence}. 


There are some special favorable cases where solving the OT problem is easy and reasonably cheap. In particular, the Wasserstein distance for one-dimensional probability densities has a closed-form formula that can be efficiently approximated. This nice property motivates the use of the sliced-Wasserstein distance \cite{bonneel2015sliced}, an alternative OT distance obtained by computing infinitely many \emph{linear projections} of the high-dimensional distribution to one-dimensional distributions and then computing the average of the Wasserstein distance between these one-dimensional representations. While having similar theoretical properties \cite{bonnotte2013unidimensional}, the sliced-Wasserstein distance has significantly lower computational requirements than the classical Wasserstein distance. Therefore, it has recently attracted ample attention and successfully been applied to a variety of practical tasks \cite{bonneel2015sliced,kolouri2016sliced,carriere2017sliced,karras2017progressive,csimcsekli2018sliced,deshpande2018generative,Kolouri_2018_CVPR,kolouri2018sliced}. 


As we will detail in the next sections, the linear projection process used in the sliced-Wasserstein distance is closely related to the Radon transform, which is widely used in tomography \cite{radon1917uber, helgason2011radon}. In other words, the sliced-Wasserstein distance is calculated via linear slicing of the probability distributions. However, the linear nature of these projections does not guarantee an efficient evaluation of the sliced-Wasserstein distance: in very high-dimensional settings, the data often lives in a thin manifold and the number of randomly chosen linear projections required to capture the structure of the data distribution grows very quickly \cite{csimcsekli2018sliced}. Reducing the number of required projections would thus result in a significant performance improvement in sliced-Wasserstein computations.


{\bf Contributions.}
In this paper, we address the aforementioned computational issues of the sliced-Wasserstein distance and for the first time, we extend the linear slicing to \emph{non-linear} slicing of probability measures. Our main contributions are summarized as follows:
\begin{itemize}[leftmargin=*,topsep = 0pt, noitemsep]
\item Using the mathematics of the \emph{generalized} Radon transform \cite{beylkin1984inversion} we extend the definition of the sliced-Wasserstein distance to an entire class of distances, which we call the generalized sliced-Wasserstein (GSW) distance. We prove that replacing the linear projections with \emph{polynomial} projections will still yield a valid distance metric and we then identify general conditions under which the
GSW distance is a distance function.   \vspace{2pt}
\item We then show that, instead of using infinitely many projections as required by the GSW distance, we can still define a valid distance metric by using a \emph{single} projection, as long as the projection gives the maximal distance in the projected space. We aptly call this distance the max-GSW distance. The max-GSW distance vastly reduces the computational cost induced by the projection operations; however, it comes with an additional cost since it requires optimization over the space of projectors.   \vspace{2pt}
\item Due to their inherent non-linearity, the GSW and max-GSW distances are expected to capture the complex structure of high-dimensional distributions by using much less projections, which will reduce the overall computational burden in a significant amount. We verify this fact in our experiments, where we illustrate the superior performance of the proposed distances in both synthetic and real-data settings.  
\end{itemize}

\section{Background}

We review in this section the preliminary concepts and formulations needed to develop our framework, namely the $p$-Wasserstein distance, the Radon transform, the sliced $p$-Wasserstein distance and the maximum sliced $p$-Wasserstein distance. In what follows, we denote by $P_p(\Omega)$ the set of Borel probability measures with finite $p$'th moment defined on a given metric space $(\Omega,d)$ and by $\mu\in P_p(X)$ and $\nu\in P_p(Y)$ probability measures defined on $X,Y\subseteq \Omega$ with corresponding probability density functions $I_\mu$ and $I_\nu$, \textit{i.e.} $d\mu(x)=I_\mu(x)dx$ and $d\nu(y)=I_\nu(y)dy$. 

\subsection{Wasserstein Distance}

The $p$-Wasserstein distance, $p \in [1,\infty)$, between $\mu$ and $\nu$ is defined as the solution of the optimal mass transportation problem \cite{villani2008optimal}:
\begin{eqnarray}
    W_p(\mu,\nu)=\left(\operatorname*{inf}_{\gamma\in \Gamma(\mu,\nu)} \int_{X\times Y} d^p(x,y)d \gamma(x,y) \right)^{\frac{1}{p}}
\end{eqnarray}
where $d^p(\cdot, \cdot)$ is the cost function, and $\Gamma(\mu,\nu)$ is the set of all transportation plans $\gamma\in\Gamma(\mu,\nu)$ such that:
\begin{eqnarray*}
\begin{array}{lr}
    \gamma(A \times Y)= \mu(A) & \text{for any Borel subset } A\subseteq X \\
    \gamma(X \times B)= \nu(B) & \text{for any Borel subset } B\subseteq Y
\end{array}.
\end{eqnarray*}

Due to Brenier's theorem \cite{brenier1991polar}, for absolutely continuous probability measures $\mu$ and $\nu$ (with respect to the Lebesgue measure), the $p$-Wasserstein distance can be equivalently obtained from
\begin{eqnarray}
    W_p(\mu,\nu)=\left(\operatorname*{inf}_{f\in MP(\mu,\nu)} \int_{X} d^p\big(x,f(x)\big) d\mu(x)\right)^{\frac{1}{p}}
\end{eqnarray}
where $MP(\mu,\nu)=\{ f:X\rightarrow Y ~|~ f_\#\mu=\nu\}$ and $f_\#\mu$ represents the pushforward of measure $\mu$, characterized as
\begin{equation*}
    \int_{A}df_\#\mu(y) = \int_{f^{-1}(A)} d\mu(x) ~\text{for any Borel subset } A\subseteq Y.
\end{equation*}

Note that in most engineering and computer science applications, $\Omega$ is a compact subset of $\mathbb{R}^d$ and  $d(x,y)=|x-y|$ is the Euclidean distance. By abuse of notation, we will use $W_p(\mu, \nu)$ and $W_p(I_\mu,I_\nu)$ interchangeably. 

{\bf One-dimensional distributions:}\; The case of one-dimensional continuous probability measures is specifically interesting as the $p$-Wasserstein distance has a closed-form solution. More precisely, for one-dimensional probability measures, there exists a unique monotonically increasing transport map that pushes one measure to another. Let $F_\mu(x)=\mu((-\infty,x])=\int_{-\infty}^{x} I_\mu(\tau)d\tau$ be the cumulative distribution function (CDF) for $I_\mu$ and define $F_\nu$ to be the CDF of $I_\nu$. The optimal transport map is then uniquely defined as $f(x)= F_\nu^{-1}(F_\mu(x))$ and, consequently, the $p$-Wasserstein distance has an analytical form given as follows: 
\begin{eqnarray}
    W_p(\mu,\nu)&=&\left( \int_{X} d^p\big(x, F_\nu^{-1}(F_\mu(x))\big) d\mu(x) \right)^{\frac{1}{p}} \nonumber \\
    &=& \left(\int_{0}^1 d^p\big(F_\mu^{-1}(z), F_\nu^{-1}(z)\big)dz \right)^{\frac{1}{p}} \label{eq:wp1d}
\end{eqnarray}
where Eq.~\eqref{eq:wp1d} results from the change of variable $F_\mu(x)=z$. Note
that for empirical distributions, Eq. \eqref{eq:wp1d} is calculated by simply sorting the samples from the two distributions and calculating the average $d^p(\cdot,\cdot)$ between the sorted samples. This requires only $O(M)$ operations at best and $O(M\log M)$ at worst, where $M$ is the number of samples drawn from each distribution (see \citet{kolouri2018sliced} for more details). 
The closed-form solution of the $p$-Wasserstein distance for one-dimensional distributions is an attractive property that gives rise to the sliced-Wasserstein (SW) distance. Next, we review the Radon transform, which enables the definition of the SW distance. We also formulate an alternative OT distance called the maximum sliced-Wasserstein distance.

\subsection{Radon Transform}

The standard Radon transform, denoted by $\mathcal{R}$, maps a function  $I \in L^1(\mathbb{R}^d)$, where $$L^1(\mathbb{R}^d) = \{ I:\mathbb{R}^d \rightarrow \mathbb{R}\ / \int_{\mathbb{R}^d} |I(x)|dx < \infty\},$$ 
to the infinite set of its integrals over the hyperplanes of $\mathbb{R}^d$ and is defined as
\begin{eqnarray}
    \mathcal{R} I(t,\theta) = \int_{\mathbb{R}^d} I(x)\delta(t-\langle x, \theta \rangle)dx,
\label{eq:radon}
\end{eqnarray}
for $(t, \theta) \in \mathbb{R} \times \mathbb{S}^{d-1}$, where $\mathbb{S}^{d-1} \subset \mathbb{R}^d$ stands for the $d$-dimensional unit sphere, $\delta(\cdot)$ the one-dimensional Dirac delta function, and $\langle \cdot, \cdot \rangle$ the Euclidean inner-product. Note that $\mathcal{R}: L^1(\mathbb{R}^d)\rightarrow L^1(\mathbb{R}\times \mathbb{S}^{d-1})$. Each hyperplane can be written as:
\begin{equation}
    H(t, \theta) = \{x\in \mathbb{R}^d \ |\ \langle x, \theta \rangle = t\},
    \label{eq:hyperplanes}
\end{equation}
which alternatively can be interpreted as a level set of the function $g \in \mathbb{R}^d\times\mathbb{S}^{d-1}\rightarrow \mathbb{R}$ defined as $g(x, \theta) = \langle x, \theta \rangle$. For a fixed $\theta$, the integrals over all hyperplanes orthogonal to $\theta$ define a continuous function $\mathcal{R}I(\cdot,\theta) : \mathbb{R} \rightarrow \mathbb{R}$ which is a projection (or a slice) of $I$. 

The Radon transform is a linear bijection \cite{natterer1986mathematics,helgason2011radon} and its inverse $\mathcal{R}^{-1}$ is defined as:
\begin{eqnarray}
    I(x) &=& \mathcal{R}^{-1}\big(\mathcal{R}I(t,\theta)\big) \nonumber \\
    &=& \int_{\mathbb{S}^{d-1}} (\mathcal{R}I(\langle x, \theta \rangle,\theta)*\eta(\langle x, \theta \rangle) d\theta
\end{eqnarray}
where $\eta(\cdot)$ is a one-dimensional high-pass filter with corresponding Fourier transform $\mathcal{F}\eta(\omega) =  c|\omega|^{d-1}$, which appears due to the Fourier slice theorem \cite{helgason2011radon}, 
and `$*$' is the convolution operator. The above definition of the inverse Radon transform is also known as the filtered back-projection method, which is extensively used in image reconstruction in the biomedical imaging community. Intuitively each one-dimensional projection (or slice) $\mathcal{R}I(\cdot,\theta)$ is first filtered via a high-pass filter and then smeared back into $\mathbb{R}^{d}$ along $H(\cdot,\theta)$ to approximate $I$. The summation of all smeared approximations then reconstructs $I$. Note that in practice, acquiring an infinite number of projections is not feasible, therefore the integration in the filtered back-projection formulation is replaced with a finite summation over projections (\textit{i.e.}, a Monte-Carlo approximation).

\comment{\textcolor{red}{[Gustavo] As with the other paper, I think this section does not clarify anything mathematical, and could probably be moved towards the end, close to a computational section.}
\textcolor{blue}{[Kimia] I agree with Gustavo's comment: this paragraph is useful but at this stage of the paper, it might confuse the reader.}}

\comment{{\bf Radon transform of empirical PDFs:}\; The Radon transform of $I_\mu$ simply follows Equation \eqref{eq:radon}, where $\mathcal{R}I_\mu(\cdot,\theta)$ is a one-dimensional marginal distribution of $I_\mu$. However, in most machine learning applications we do not have access to the distribution $I_\mu$ but to a set of samples drawn from $I_\mu$ and denoted by $\{ x_n \}$. In such scenarios, kernel density estimation can be used to approximate $I_\mu$ from its samples:
\begin{eqnarray*}
I_\mu(x)\approx \frac{1}{N}\sum_{n=1}^N \phi(x-x_n)
\end{eqnarray*} 
where $\phi:\mathbb{R}^d\rightarrow \mathbb{R}^+$ is a density kernel such that $\int_{\mathbb{R}^d} \phi(x)dx=1$ (e.g., a Gaussian kernel). The Radon transform of  $I_\mu$ can then be approximated by:
\begin{eqnarray*}
    \mathcal{R}I_\mu(t,\theta)\approx \frac{1}{N}\sum_{n=1}^N \mathcal{R}\phi(t - \langle x_n, \theta \rangle, \theta).
\end{eqnarray*} 

Note that certain density kernels have an analytical Radon transform. For instance, for $\phi(x) = \delta(x)$, the Radon transform is $\mathcal{R}\phi(t,\theta)=\delta(t)$. Similarly, for Gaussian kernels and $\phi(x) = \mathcal{N}_d(0_d,\sigma^2I_{d\times d})$, the Radon transform is $\mathcal{R}\phi(t,\theta)=\mathcal{N}_1(0,\sigma^2)$. Moreover, given the high-dimensional nature of the problem, estimating the density $I$ in $\mathbb{R}^d$ requires a large number of samples. However,  the projections of $I$, $\mathcal{R}I(\cdot,\theta)$, are one-dimensional, therefore it may not be critical to have a large number of samples to estimate these one-dimensional densities. }

\subsection{Sliced-Wasserstein and Maximum Sliced-Wasserstein Distances} \label{subsection:SWandmaxSW}

The idea behind the sliced $p$-Wasserstein distance is to first, obtain a family of one-dimensional representations for a higher-dimensional probability distribution through linear projections (via the Radon transform), and then, calculate the distance between two input distributions as a functional on the $p$-Wasserstein distance of their one-dimensional representations (\textit{i.e.}, the one-dimensional marginal distributions). The sliced $p$-Wasserstein distance between $I_\mu$ and $I_\nu$ is then formally defined as:
 \begin{equation}
    SW_p(I_\mu,I_\nu)=\left( \int_{\mathbb{S}^{d-1}} W^p_p\big( \mathcal{R} I_\mu(.,\theta), \mathcal{R} I_\nu(.,\theta) \big) d\theta \right)^{\frac{1}{p}}
 \label{eq:SW}
 \end{equation}
 
This is indeed a distance function as it satisfies positive-definiteness, symmetry and the triangle inequality \cite{bonnotte2013unidimensional,kolouri2016sliced}. 

The computation of the SW distance requires an integration over the unit sphere in $\mathbb{R}^{d}$. In practice, this integration is approximated by using a simple Monte Carlo scheme that draws samples $\{ \theta_l \}$ from the uniform distribution on $\mathbb{S}^{d-1}$ and replaces the integral with a finite-sample average: 
\begin{equation}
    SW_p(I_\mu,I_\nu) \approx  \left(\frac{1}{L}\sum_{l=1}^L W^p_p\big(\mathcal{R}I_\mu(\cdot, \theta_l), \mathcal{R}I_\nu(\cdot, \theta_l)\big)\right)^{\frac{1}{p}}
\label{eq:empiricalSWD}
\end{equation}

The sliced $p$-Wasserstein distance has important practical implications: provided that $\mathcal{R}I_\mu(\cdot, \theta_l)$ and $\mathcal{R}I_\nu(\cdot, \theta_l)$ can be computed for any sample $\theta_l$, then the SW distance is obtained by solving several one-dimensional optimal transport problems, which have closed-form solutions. It is especially useful when one only has access to samples of a high-dimensional PDF $I$ and kernel density estimation is required to estimate $I$: one-dimensional kernel density estimation of PDF slices is a much simpler task compared to the direct estimation of $I$ from its samples. The downside is that as the dimensionality grows, one requires a larger number of projections to accurately estimate $I$ from $\mathcal{R}I(\cdot, \theta)$. In short, if a reasonably smooth two-dimensional distribution can be approximated using $L$ projections, then $\mathcal{O}(L^{d-1})$ projections are required to approximate a similarly smooth $d$-dimensional distribution for $d \geq 2$.

To further clarify this, let $I_\mu=\mathcal{N}(0, I_d)$ and $I_\nu=\mathcal{N}(x_0, I_d)$, $x_0 \in \mathbb{R}^d$, be two multivariate Gaussian densities with the identity matrix as the covariance matrix. Their projected representations are one-dimensional Gaussian distributions of the form $\mathcal{R}I_\mu(\cdot,\theta)=\mathcal{N}(0,1)$ and $\mathcal{R}I_\mu(\cdot,\theta)=\mathcal{N}(\langle \theta, x_0 \rangle, 1)$. It is therefore clear that $W_2(\mathcal{R}I_\mu(\cdot,\theta),\mathcal{R}I_\nu(\cdot,\theta))$ achieves its maximum value when $\theta=\frac{x_0}{\|x_0\|_2}$ and is zero for $\theta$'s that are orthogonal to $x_0$. On the other hand, we know that vectors that are randomly picked from the unit sphere are more likely to be nearly orthogonal in high-dimension. More rigorously, the following inequality holds: $Pr(| \langle \theta, \frac{x_0}{\|x_0\|_2} \rangle | < \epsilon) > 1-e^{(-d\epsilon^2)}$, which implies that for a high dimension $d$, the majority of sampled $\theta$'s would be nearly orthogonal to $x_0$ and therefore, $W_2(\mathcal{R}I_\mu(\cdot,\theta), \mathcal{R}I_\nu(\cdot,\theta))\approx 0$ with high probability. 

To remedy this issue, one can avoid uniform sampling of the unit sphere, and pick samples $\theta$'s that contain discriminant information between $I_\mu$ and $I_\nu$ instead. This idea was for instance used in \citet{deshpande2018generative}, where the authors first calculate a linear discriminant subspace and then measure the empirical SW distance by setting the $\theta$'s to be the discriminant components of the subspace. 

A similarly flavored but less heuristic approach is to use the maximum sliced $p$-Wasserstein (max-SW) distance, which is an alternative OT metric defined as: 
\begin{equation}
     \msw (I_\mu,I_\nu) = \max_{\theta \in \mathbb{S}^{d-1}} W_p\big(\mathcal{R} I_\mu(\cdot,\theta),\mathcal{R} I_\nu(\cdot,\theta) \big)
 \label{eq:msw}
 \end{equation}

Given that $W_p$ is a distance, it is easy to show that max-$SW_p$ is also a distance: we will prove in Section~\ref{subsection:GSWandmaxGSW} that the metric axioms hold for the maximum generalized sliced-Wasserstein distance, which contains the max-SW distance as a special case. 
\begin{figure}[t!]
    \centering
    \includegraphics[width=\columnwidth]{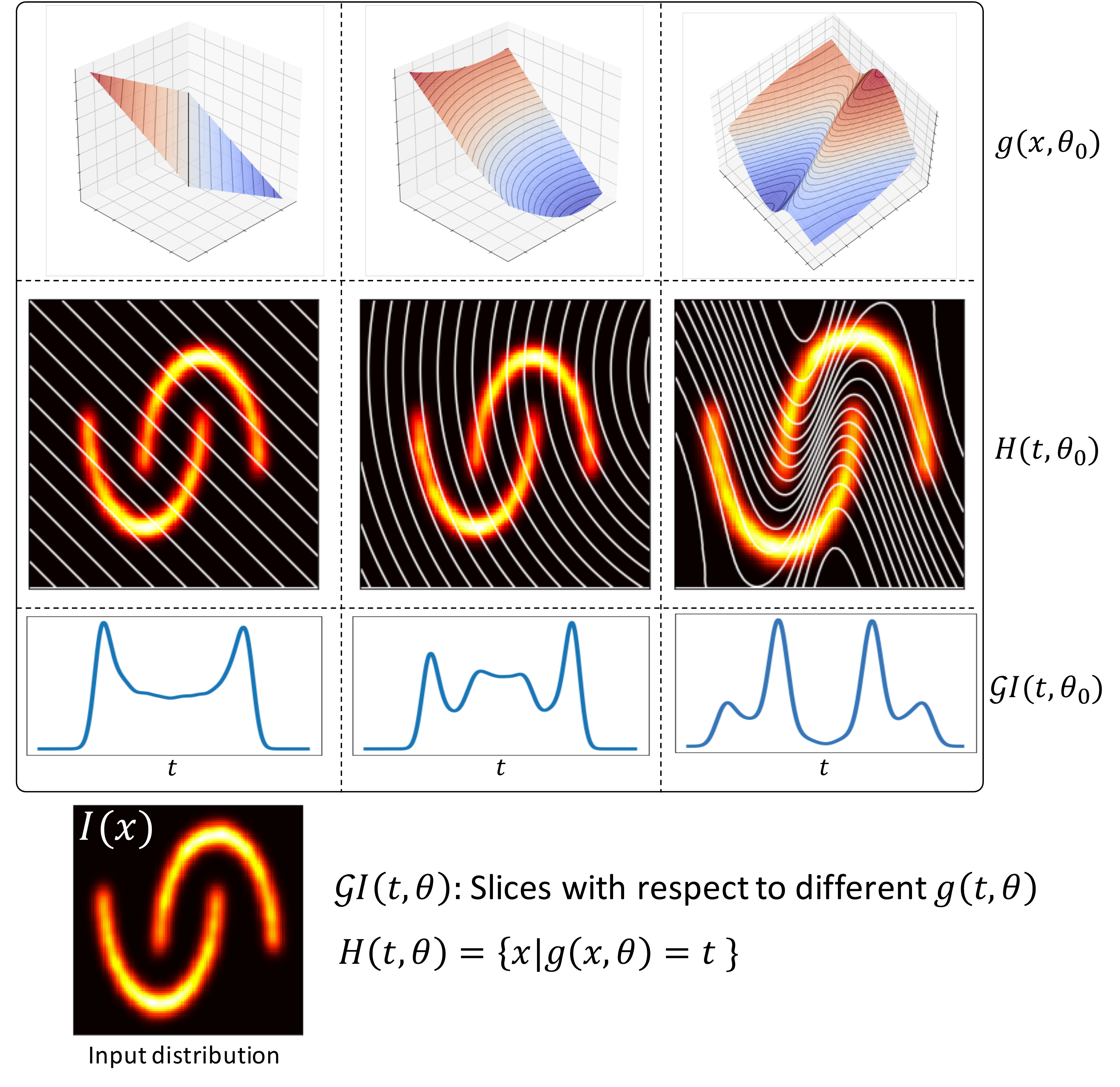}
    \caption{Visualizing the slicing process for classical and generalized Radon transforms for the Half Moons distribution. The slices $\mathcal{G}I(t,\theta)$ follow Equation \eqref{eq:gradon}.}
    \label{fig:fig1}
\end{figure}

\comment{\subsection{Generalized Radon Transform}
\textcolor{blue}{[Kimia] Maybe we should move this subsection in section 3.}}

\section{Generalized Sliced-Wasserstein Distances} 

We propose in this paper to extend the definition of the sliced-Wasserstein distance to formulate a new optimal transport metric, which we call the generalized sliced-Wasserstein (GSW) distance. The GSW distance is obtained using the same procedure as for the SW distance, except that here, the one-dimensional representations are acquired through nonlinear projections. In this section, we first review the generalized Radon transform, which is used to project the high-dimensional distributions, and we then formally define the class of GSW distances. We also extend the concept of max-SW distance to the class of maximum generalized sliced-Wasserstein (max-GSW) distances.

\subsection{Generalized Radon Transform}

The generalized Radon transform (GRT) extends the original idea of the classical Radon transform introduced by \citet{radon1917uber} from integration over hyperplanes of $\mathbb{R}^d$ to integration over hypersurfaces, \textit{i.e.} $(d-1)$-dimensional manifolds \cite{beylkin1984inversion,denisyuk1994inversion,ehrenpreis2003universality,gel1969differential,kuchment2006generalized,homan2017injectivity}. The GRT has various applications, including Thermoacoustic Tomography, where the hypersurfaces are spheres, and Electrical Impedance Tomography, which requires integration over hyperbolic surfaces.

To formally define the GRT, we introduce a function $g$ defined on $\mathcal{X} \times (\mathbb{R}^n \backslash \{ 0 \})$ with $\mathcal{X} \subset \mathbb{R}^d$. We say that $g$ is a \emph{defining function} when it satisfies the four conditions below:


\begin{assumption}
    \label{asmp:g_real}
    $g$ is a real-valued $C^\infty$ function on $\mathcal{X} \times (\mathbb{R}^n \backslash \{ 0 \})$
\end{assumption}
\begin{assumption}
    \label{asmp:g_hom}
    $g(x, \theta)$ is homogeneous of degree one in $\theta$, \textit{i.e.},
    \begin{equation*}
        \forall \lambda \in \mathbb{R},\; g(x, \lambda \theta) = \lambda g(x, \theta)
    \end{equation*}
\end{assumption}
\begin{assumption}
    \label{asmp:g_deg}
    $g$ is non-degenerate in the sense that
    \begin{equation*}
        \forall (x,\theta) \in \mathcal{X} \times \mathbb{R}^n \backslash \{ 0 \},\; \frac{\partial g}{\partial x}(x, \theta) \neq 0 
    \end{equation*}
\end{assumption}
\begin{assumption}
    \label{asmp:g_hes}
    The mixed Hessian of $g$ is strictly positive, \textit{i.e.} 
    \begin{equation*}
        \text{det}\left( \left( \frac{\partial^2 g}{\partial x^i \partial \theta^j} \right)_{i,j} \right) > 0
    \end{equation*}
\end{assumption}

Then, the GRT of $I\in L^1(\mathbb{R}^d)$ is the integration of $I$ over hypersurfaces characterized by the level sets of $g$, which are characterized by $H_{t,\theta} = \{ x \in \mathcal{X}\ |\ g(x, \theta) = t \}$.

Let $g$ be a defining function. The generalized Radon transform of $I$, denoted by $\mathcal{G}I$, is then formally defined as:
\begin{equation}
    \mathcal{G}I(t,\theta)=\int_{\mathbb{R}^d} I(x)\delta(t-g(x,\theta))dx
    \label{eq:gradon}
\end{equation}
Note that the standard Radon transform is a special case of the GRT with $g(x,\theta)= \langle x, \theta \rangle$. Figure~\ref{fig:fig1} illustrates the slicing process for standard and generalized Radon transforms for the Half Moons dataset as input.

\subsection{Generalized Sliced-Wasserstein and Maximum Generalized Sliced-Wasserstein Distances} \label{subsection:GSWandmaxGSW}

Following the definition of the SW distance in Equation~\eqref{eq:SW}, we define the generalized sliced $p$-Wasserstein distance using the generalized Radon transform as:
\begin{equation}
    GSW_p(I_\mu,I_\nu) = \left(\int_{\Omega_\theta} W^p_p\big(\mathcal{G} I_\mu(\cdot,\theta), \mathcal{G} I_\nu(\cdot,\theta)\big) d\theta\right)^{\frac{1}{p}} \label{eq:GSW}
\end{equation}
where $\Omega_\theta$ is a compact set of feasible parameters for $g(\cdot,\theta)$ (e.g., $\Omega_\theta=\mathbb{S}^{d-1}$ for $g(\cdot,\theta) = \langle \cdot, \theta \rangle$). 

The GSW distance can also suffer from the projection complexity issue described in Section~\ref{subsection:SWandmaxSW}; that is why we formulate the maximum generalized sliced $p$-Wasserstein distance, which generalizes the max-SW distance as defined in \eqref{eq:msw}:
\begin{eqnarray}
\mgsw(I_\mu,I_\nu) = \max_{\theta\in\Omega_\theta} W_p\big(\mathcal{G} I_\mu(\cdot, \theta),\mathcal{G} I_\nu(\cdot, \theta)\big) 
 \label{eq:MaxGSW}
\end{eqnarray}


\begin{proposition}
    The generalized sliced $p$-Wasserstein distance and the maximum generalized sliced $p$-Wasserstein distance are, indeed, distances over $\mathcal{P}_p(\Omega)$ if and only if the generalized Radon transform is injective.
\end{proposition}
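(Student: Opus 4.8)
The plan is to split the verification of the metric axioms into two parts: the axioms that hold for \emph{any} choice of defining function $g$ (positive‑definiteness being the sole exception), and the identity of indiscernibles, which I claim is equivalent to injectivity of $\mathcal{G}$. Non‑negativity of $GSW_p$ and $\mgsw$ is immediate, since both are built from the non‑negative quantities $W_p\big(\mathcal{G}I_\mu(\cdot,\theta),\mathcal{G}I_\nu(\cdot,\theta)\big)$ and $t\mapsto t^{1/p}$ is non‑negative; symmetry is inherited from the symmetry of $W_p$. For the triangle inequality of $GSW_p$, I would fix $\mu,\nu,\rho$, apply the pointwise (in $\theta$) triangle inequality of $W_p$ between the three sliced measures, and then take the $L^p(\Omega_\theta,d\theta)$‑norm of both sides, using Minkowski's inequality on the right — this is the same argument that establishes the metric property of $SW_p$ \cite{bonnotte2013unidimensional,kolouri2016sliced}, and it uses nothing about $g$. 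For $\mgsw$, bounding each of the two terms on the right of the pointwise inequality by its maximum over $\Omega_\theta$ and then maximising the left‑hand side gives the triangle inequality at once. None of these steps depends on injectivity.

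Before treating positive‑definiteness I would record two auxiliary facts. First, for each $\theta\in\Omega_\theta$, $\mathcal{G}I_\mu(\cdot,\theta)$ is the density of the pushforward of $\mu$ under $x\mapsto g(x,\theta)$ — a genuine one‑dimensional probability density because $g(\cdot,\theta)$ is a $C^\infty$ submersion by \cref{asmp:g_real} and \cref{asmp:g_deg} — and it has finite $p$‑th moment (so the slice‑wise $W_p$ is finite and \eqref{eq:GSW}–\eqref{eq:MaxGSW} are well defined) whenever $g(\cdot,\theta)$ has at most polynomial growth, in particular when $\Omega$ is compact. Second, the map $\theta\mapsto W_p\big(\mathcal{G}I_\mu(\cdot,\theta),\mathcal{G}I_\nu(\cdot,\theta)\big)$ is continuous on $\Omega_\theta$: if $\theta_n\to\theta$ then $\int\phi\,d\big(\mathcal{G}I_\mu(\cdot,\theta_n)\big)=\int\phi(g(x,\theta_n))I_\mu(x)\,dx\to\int\phi(g(x,\theta))I_\mu(x)\,dx$ for every bounded continuous $\phi$ by dominated convergence, so the slices converge weakly, and weak convergence plus a uniform moment bound upgrades to $W_p$‑convergence. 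Together with compactness of $\Omega_\theta$ this continuity also shows the maximum in \eqref{eq:MaxGSW} is attained.

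For the ``if'' direction, assume $\mathcal{G}$ is injective. If $GSW_p(\mu,\nu)=0$, then the non‑negative continuous function $\theta\mapsto W_p\big(\mathcal{G}I_\mu(\cdot,\theta),\mathcal{G}I_\nu(\cdot,\theta)\big)$ has zero integral over $\Omega_\theta$, hence vanishes identically; since $W_p$ is a metric this forces $\mathcal{G}I_\mu(\cdot,\theta)=\mathcal{G}I_\nu(\cdot,\theta)$ for every $\theta$, i.e. $\mathcal{G}I_\mu=\mathcal{G}I_\nu$, and injectivity yields $I_\mu=I_\nu$, that is $\mu=\nu$. The same argument works verbatim for $\mgsw$, with the integration step replaced by ``the maximum of a non‑negative quantity is zero''. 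For the ``only if'' direction, suppose $\mathcal{G}$ is not injective, so there exist $\mu\ne\nu$ with $\mathcal{G}I_\mu=\mathcal{G}I_\nu$; then $W_p\big(\mathcal{G}I_\mu(\cdot,\theta),\mathcal{G}I_\nu(\cdot,\theta)\big)=0$ for all $\theta$, whence $GSW_p(\mu,\nu)=\mgsw(\mu,\nu)=0$ although $\mu\ne\nu$, so neither functional separates points. This establishes the stated equivalence; specialising $g(x,\theta)=\langle x,\theta\rangle$ and invoking injectivity of the classical Radon transform recovers, as announced in Section~\ref{subsection:SWandmaxSW}, that $SW_p$ and $\msw$ are metrics.

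The step I expect to be most delicate is the passage from ``$\int_{\Omega_\theta}W_p^p\,d\theta=0$'' to pointwise equality of all the sliced measures: this is clean only once one has the continuity of $\theta\mapsto W_p\big(\mathcal{G}I_\mu(\cdot,\theta),\mathcal{G}I_\nu(\cdot,\theta)\big)$ (and the reference measure on $\Omega_\theta$ has full support), so proving that continuity carefully — and, relatedly, checking that each $\mathcal{G}I_\mu(\cdot,\theta)$ genuinely lies in $P_p(\mathbb{R})$ under the running hypotheses on $g$ and $\Omega$, so that all $W_p$ terms are finite and the maxima attained — is the real technical content. Everything else reduces to Minkowski's inequality and the metric axioms of $W_p$ transferred slice by slice, and is insensitive to the choice of $g$.
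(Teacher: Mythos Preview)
Your proposal is correct and follows essentially the same route as the paper: non-negativity and symmetry inherited from $W_p$, the triangle inequality for $GSW_p$ via the pointwise $W_p$ triangle inequality followed by Minkowski in $L^p(\Omega_\theta)$, the triangle inequality for $\mgsw$ by bounding each summand by its maximum, and the identity of indiscernibles reduced to injectivity of $\mathcal{G}$. Your treatment is in fact more careful than the paper's on two points it glosses over --- continuity of $\theta\mapsto W_p\big(\mathcal{G}I_\mu(\cdot,\theta),\mathcal{G}I_\nu(\cdot,\theta)\big)$ (so that vanishing integral gives vanishing everywhere, not just almost everywhere) and well-definedness of the slices in $P_p(\mathbb{R})$ --- but the overall architecture is the same.
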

\begin{proof}
    The non-negativity and symmetry are direct consequences of the fact that the Wasserstein distance is a metric \cite{villani2008optimal}: see supplementary material. 

    We prove the triangle inequality for $GSW_p$ and max-$GSW_p$. Let $\mu_1$, $\mu_2$ and $\mu_3$ in $\mathcal{P}_p(\Omega)$. Since the Wasserstein distance satisfies the triangle inequality, we have, for all $\theta \in \Omega_\theta$,
        \begin{align*}
            W_p(\mathcal{GI}_{\mu_1}(\cdot, \theta), \mathcal{GI}_{\mu_3}(\cdot, \theta)) &\leq W_p(\mathcal{GI}_{\mu_1}(\cdot, \theta), \mathcal{GI}_{\mu_2}(\cdot, \theta)) \\
            &\;\;\; + W_p(\mathcal{GI}_{\mu_2}(\cdot, \theta), \mathcal{GI}_{\mu_3}(\cdot, \theta))
        \end{align*}
    Therefore, we can write:
    \begin{align}
        GSW_p&(I_{\mu_1}, I_{\mu_3}) = \left( \int_{\Omega_\theta} W_p^p(\mathcal{G}I_{\mu_1}(\cdot, \theta), \mathcal{G}I_{\mu_3}(\cdot, \theta)) d\theta \right)^{\frac{1}{p}} \nonumber \\
        &\leq \left( \int_{\Omega_\theta} \big( W_p(\mathcal{G}I_{\mu_1}(\cdot, \theta), \mathcal{G}I_{\mu_2}(\cdot, \theta)) \right. \nonumber \\
        &\;\;\; + \left. W_p(\mathcal{G}I_{\mu_2}(\cdot, \theta), \mathcal{G}I_{\mu_3}(\cdot, \theta)) \big)^p d\theta \vphantom{\int} \right)^{\frac{1}{p}} \nonumber \\
        &\leq \left( \int_{\Omega_\theta} W_p^p(\mathcal{G}I_{\mu_1}(\cdot, \theta), \mathcal{G}I_{\mu_2}(\cdot, \theta)) d\theta \right)^{\frac{1}{p}} \nonumber \\
        &\;\;\; + \left( \int_{\Omega_\theta} W_p^p(\mathcal{G}I_{\mu_2}(\cdot, \theta), \mathcal{G}I_{\mu_3}(\cdot, \theta)) d\theta \right)^{\frac{1}{p}} \label{eq:gsw_minkowski}
    \end{align}
    where inequality~\eqref{eq:gsw_minkowski} follows from the application of the Minkowski inequality in $L^p(\Omega_\theta)$. We conclude that $GSW_p$ satisfies the triangle inequality.
    
    Let $\theta^*=\argmax_{\theta \in \Omega_\theta} W_p(\mathcal{GI}_{\mu_1}(\cdot, \theta), \mathcal{GI}_{\mu_3}(\cdot, \theta))$; then,
    \begin{align*}
       &\text{max-}GSW_p(I_{\mu_1}, I_{\mu_3}) \\
        &= \max_{\theta \in \Omega_\theta}\ W_p(\mathcal{GI}_{\mu_1}(\cdot, \theta), \mathcal{GI}_{\mu_3}(\cdot, \theta)) \\
        &= W_p(\mathcal{GI}_{\mu_1}(\cdot, \theta^*), \mathcal{GI}_{\mu_3}(\cdot, \theta^*)) \\
        &\leq W_p(\mathcal{GI}_{\mu_1}(\cdot, \theta^*), \mathcal{GI}_{\mu_2}(\cdot, \theta^*)) \\
        &\;\;\; + W_p(\mathcal{GI}_{\mu_2}(\cdot, \theta^*), \mathcal{GI}_{\mu_3}(\cdot, \theta^*))\\
        &\leq \max_{\theta \in \Omega_\theta} W_p(\mathcal{GI}_{\mu_1}(\cdot, \theta), \mathcal{GI}_{\mu_2}(\cdot, \theta)) \\
        &\;\;\; + \max_{\theta \in \Omega_\theta} W_p(\mathcal{G}I_{\mu_2}(\cdot, \theta), \mathcal{G}I_{\mu_3}(\cdot, \theta)) \\
    &\leq \text{max-}GSW_p(I_{\mu_1}, I_{\mu_2}) +  \text{max-}GSW_p(I_{\mu_2}, I_{\mu_3})
    \end{align*}
    So max-$GSW_p$ also satisfies the triangle inequality.
    
    Since $W_p(\mu, \mu) = 0$ for any $\mu$, we have $GSW_p(I_\mu, I_\nu) = 0$ and max-$GSW_p(I_\mu, I_\nu) = 0$. Now, $GSW_p(I_\mu, I_\nu) = 0$ or max-$GSW_p(I_\mu, I_\nu) = 0$ is equivalent to $\mathcal{G}I_\mu(\cdot, \theta) = \mathcal{G}I_\nu(\cdot, \theta)$ for almost all $\theta \in \Omega_\theta$. Therefore, GSW and max-GSW are distances if and only if $\mathcal{G}I_\mu(\cdot, \theta) = \mathcal{G}I_\nu(\cdot, \theta)$ implies $\mu = \nu$, \textit{i.e.} the GRT is injective. \end{proof}
    

\begin{remark}
    If the chosen generalized Radon transform is not injective, then we can only say that the GSW and max-GSW distances are pseudo-metrics: they still satisfy non-negativity, symmetry, the triangle inequality, and $GSW_p(I_\mu, I_\mu) = 0$ and $\text{max-}GSW_p(I_\mu, I_\mu) = 0$. 
\end{remark}


\subsection{Injectivity of the Generalized Radon Transform}

We have shown that the injectivity of the GRT is crucial for the GSW and max-GSW distances to be, indeed, distances between probability measures. Here, we enumerate some of the known defining functions that lead to injective GRTs.  

The investigation of the sufficient and necessary conditions for showing the injectivity of GRTs is a long-standing topic \cite{beylkin1984inversion,homan2017injectivity,uhlmann2003inside,ehrenpreis2003universality}.
The circular defining function, $g(x,\theta) = \|x-r*\theta\|_2$ with $r\in\mathbb{R}^+$ and $\Omega_\theta = \mathbb{S}^{d-1}$ was shown to provide an injective GRT \cite{kuchment2006generalized}. More interestingly, homogeneous polynomials with an odd degree also yield an injective GRT \cite{rouviere}, \textit{i.e.}
\begin{align*}
    g(x,\theta) = \sum_{|\alpha| = m} \theta_\alpha x^\alpha,
\end{align*}
where we use the multi-index notation $\alpha = (\alpha_1, \dots, \alpha_{d_\alpha}) \in \mathbb{N}^{d_\alpha}$, $|\alpha| = \sum_{i=1}^{d_\alpha} \alpha_i$, and $x^\alpha = \prod_{i=1}^{d_\alpha} x_i^{\alpha_i}$. Here, the summation iterates over all possible multi-indices $\alpha$, such that $|\alpha| = m$, where $m$ denotes the degree of the polynomial and $\theta_\alpha \in \mathbb{R}$. The parameter set for homogeneous polynomials is then set to $\Omega_\theta=\mathbb{S}^{d_\alpha-1}$.  We can observe that choosing $m=1$ reduces to the linear case $\langle x,\theta\rangle$, since the set of the multi-indices with $|\alpha|=1$ becomes $\{ (\alpha_1, \dots, \alpha_d); \alpha_i = 1 \text{ for a single } i\in \llbracket 1, d \rrbracket, \text{ and } \alpha_j = 0, \quad \forall j \neq i\}$ and contains $d$ elements.

\section{Numerical Implementation}

In this section, we briefly review the numerical methods used to compute the GSW and max-GSW distances. 
\begin{algorithm}[t!]
\caption{\ GSW Distance}
\label{alg:algo1}
\begin{algorithmic}
\INPUT{$\{x_i\sim I_\mu\}_{i=1}^N$, ~~$\{y_i\sim I_\nu\}_{i=1}^N$,\; order $p$, \\ 
\hspace{.2in}  number of slices $L$, defining function $g$}
\STATE Initialize $d=0$
\FOR{$l=1$ to $L$}
\STATE Sample $\theta_l$ from $\Omega_\theta$ uniformly
\STATE Compute $\hat{x}_i=g(x_i,\theta_l)$ and $\hat{y}_i=g(y_i,\theta_l)$ for each $i$
\STATE Sort $\hat{x}_i$ and $\hat{y}_j$ in ascending order s.t. $\hat{x}_{i[n]}\leq\hat{x}_{i[n+1]}$ and $\hat{y}_{j[n]}\leq\hat{y}_{j[n+1]}$ 
\STATE $d=d+\frac{1}{L}\sum_{n=1}^N |\hat{x}_{i[n]}-\hat{y}_{i[n]}|^p$
\ENDFOR
\OUTPUT{$d^{\frac{1}{p}}\approx GSW_p(I_\mu,I_\nu)$}
\end{algorithmic}
\end{algorithm}

\begin{algorithm}[t!]
\caption{\ Max-GSW Distance}
\label{alg:algo2}
\begin{algorithmic}
\INPUT{$\{x_i\sim I_\mu\}_{i=1}^N$, ~~$\{y_j\sim I_\nu\}_{j=1}^N$, \\ 
\hspace{.2in} order $p$, defining function $g(x,\theta)$}
\STATE Randomly initialize $\theta\in\Omega_\theta$
\WHILE{$\theta$ has not converged}
\STATE Compute $\hat{x}_i=g(x_i,\theta_l)$ and $\hat{y}_i=g(y_i,\theta_l)$ for each $i$
\STATE Sort $\hat{x}_i$ and $\hat{y}_j$ in ascending order s.t. $\hat{x}_{i[n]}\leq\hat{x}_{i[n+1]}$ and $\hat{y}_{j[n]}\leq\hat{y}_{j[n+1]}$ 
\STATE $\theta=\underset{\Omega_\theta}{Proj}(ADAM (\nabla_\theta(\frac{1}{N}\sum_{n=1}^N |\hat{x}_{i[n]}-\hat{y}_{j[n]}|^p),\theta ))$
\ENDWHILE
\STATE Sort $\hat{x}_i$ and $\hat{y}_i$ in ascending order 
\STATE $d=\frac{1}{N}\sum_{n=1}^N |\hat{x}_{i[n]}-\hat{y}_{i[n]}|^p$

\OUTPUT{$d^{\frac{1}{p}} \approx \mgsw(I_\mu,I_\nu)$}
\end{algorithmic}
\end{algorithm}

\subsection{Generalized Radon Transforms of Empirical PDFs}

In most machine learning applications, we do not have access to the distribution $I_\mu$ but to a set of samples $\{ x_i \}_{i=1}^N$ drawn from $I_\mu$, for which the empirical density is:
\begin{eqnarray*}
I_\mu(x)\approx \frac{1}{N}\sum_{i=1}^N \delta(x-x_i)
\end{eqnarray*} 
 The GRT of the empirical density is then given by:
\begin{eqnarray*}
    \mathcal{G}I_\mu(t,\theta)\approx \frac{1}{N}\sum_{i=1}^N \delta \big(t-g(x_i,\theta)\big)
\end{eqnarray*} 
 Moreover, for high-dimensional problems, estimating $I_\mu$ in $\mathbb{R}^d$ requires a large number of samples. However,  the projections of $I_\mu$, $\mathcal{G}I(\cdot,\theta)$, are one-dimensional and it may not be critical to have a large number of samples to estimate these one-dimensional densities. 

\subsection{Numerical Implementation of GSW Distances} 

Let $\{x_i\}_{i=1}^N$ and $\{y_j\}_{j=1}^N$ be samples respectively drawn from $I_\mu$ and $I_\nu$, and let $g(\cdot,\theta)$ be a defining function.  Following the work of \citet{kolouri2018sliced}, the Wasserstein distance between one-dimensional distributions $\mathcal{G}I_\mu(\cdot,\theta)$ and $\mathcal{G}I_\nu(\cdot,\theta)$ can be calculated from sorting their samples and calculating the $L_p$ distance between the sorted samples. In other words, the GSW distance between $I_\mu$ and $I_\nu$ can be approximated from their samples as follows:
\begin{equation*}
    GSW_p(I_\mu,I_\nu)\approx \Big(\frac{1}{L} \sum_{l=1}^L \sum_{n=1}^N |g(x_{i[n]},\theta_l)-g(y_{j[n]},\theta_l)|^p \Big)^{\frac{1}{p}}
\end{equation*}
where $i[m]$ and $j[n]$ are the indices of sorted $\{g(x_i,\theta)\}_{i=1}^N$ and $\{g(y_j,\theta)\}_{j=1}^N$. The procedure to approximate the GSW distance is summarized in Algorithm \ref{alg:algo1}.

\subsection{Numerical Implementation of max-GSW Distances} 

To compute the max-GSW distance, we perform an EM-like optimization scheme: (a) for a fixed $\theta$, $g(x_i,\theta)$ and $g(y_i,\theta)$ are sorted to compute $W_p$, (b) $\theta$ is updated with:
\begin{equation*}
    \theta = \underset{\Omega_\theta}{Proj} \big(ADAM\big(\nabla_\theta(\frac{1}{N}\sum_{n=1}^N |g(x_{i[n]},\theta)-g(y_{j[n]},\theta)|^p),\theta\big)\big)
\end{equation*}
where $ADAM$ refers to the ADAM optimizer \cite{kingma2014adam} and $\underset{\Omega_\theta}{Proj}(\cdot)$ is the operator projecting $\theta$ onto $\Omega_\theta$. For instance, when $\theta\in\mathbb{S}^{n-1}$, $\underset{\Omega_\theta}{Proj}(\theta)=\frac{\theta}{\|\theta\|}$. 

\begin{remark} Here, we find the optimal $\theta$ by optimizing the actual $W_p$, as opposed to the heuristic approaches proposed in \citet{deshpande2018generative} and \citet{kolouri2018sliced}, where the pseudo-optimal slice is found via perceptrons or penalized linear discriminant analysis  \cite{wang2011penalized}.
\end{remark}

Finally, once convergence is reached, the max-GSW distance is approximated with: 
\begin{equation*}
    \mgsw(I_\mu,I_\nu)\approx \big(\frac{1}{N}\sum_{n=1}^N |g(x_{i[n]},\theta^*)-g(y_{j[n]},\theta^*)|^p \big)^{\frac{1}{p}}
\end{equation*}
The whole procedure is summarized in Algorithm \ref{alg:algo2}. 

\section{Experiments}

\subsection{Generalized Sliced-Wasserstein Flows}

Our first experiment demonstrates the effects of the choice of the GSW distance in its purest form by considering the following problem: \;
$\operatorname{min}_{\mu} GSW_p(\mu,\nu)$, where $\nu$ is a target distribution and $\mu$ is the source distribution, which is initialized to be the normal distribution. The optimization is then solved iteratively via
\begin{equation*}
\partial_t\mu_t= -\nabla GSW_p(\mu_t,\nu), ~~\mu_0=\mathcal{N}(0,1)
\end{equation*}
We used 5 well-known distributions as the target, namely the 25-Gaussians, 8-Gaussians, Swiss Roll, Half Moons and Circle distributions. We compare linear (\textit{i.e.}, SW distance), circular, homogeneous polynomial of degree 3 and homogeneous polynomial of degree 5 as defining functions. We used the exact same optimization scheme for all methods, with $L=10$ random projections, and measured the 2-Wasserstein distance between $\mu_t$ and $\nu$ at each iteration of the optimization (via solving a linear programming at each step). We repeated each experiment 100 times and reported the mean and standard deviation of the 2-Wasserstein distance for all five target datasets in Figure \ref{fig:exp1}. While the choice of the defining function $g(\cdot,\theta)$ is data-dependent, one can see that the homogeneous polynomial of degree 3 is among the top two performers for all datasets. 

\begin{figure}[t!]
    \centering
    \includegraphics[width=\columnwidth]{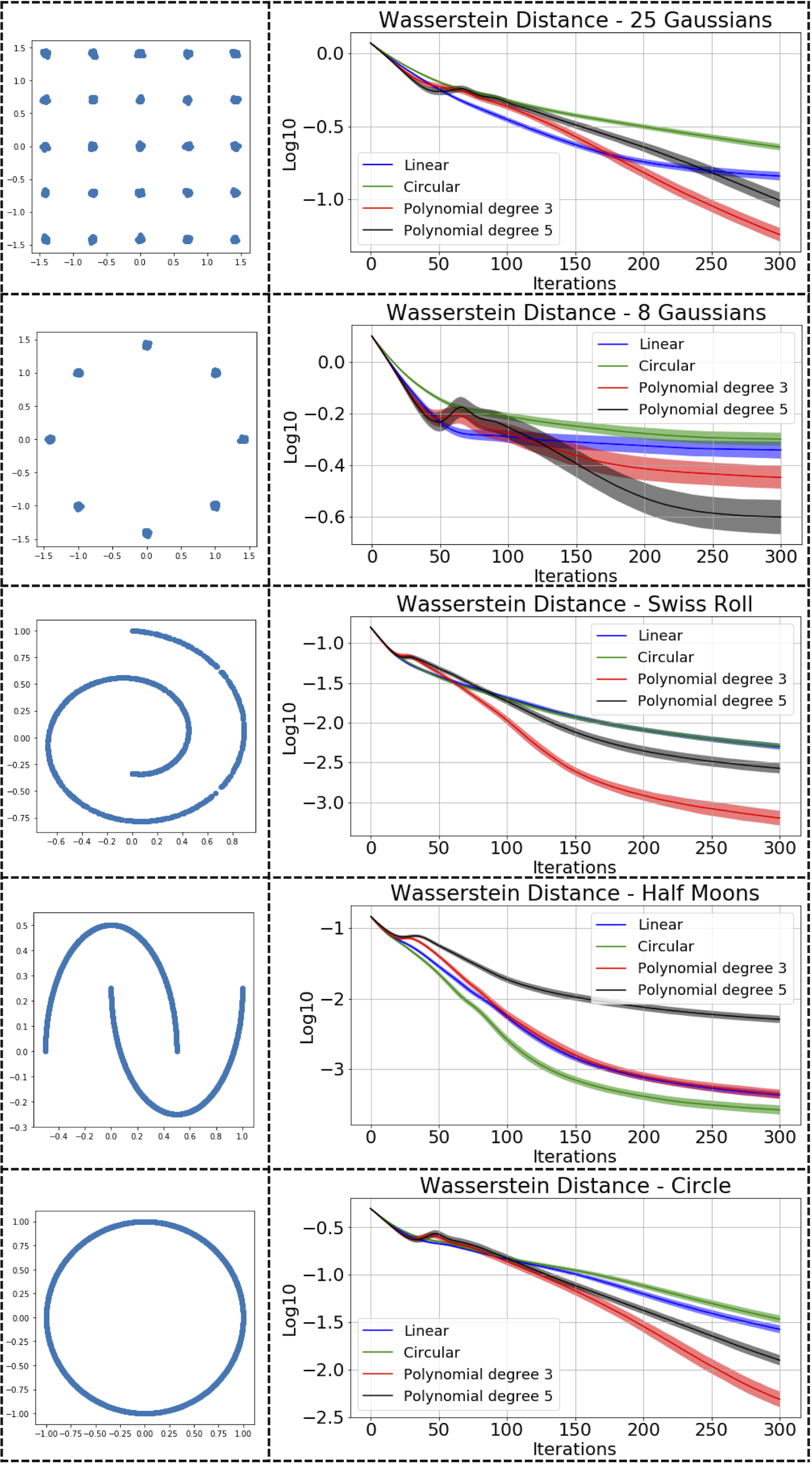}
    \caption{Log 2-Wasserstein distance between the source and target distributions as a function of the number of iterations for 5 classical target distributions.}
    \label{fig:exp1}
    \vspace{-.1in}
\end{figure}

For clarity purposes, we chose to not report the \mgsw~ results for the same experiment in Figure \ref{fig:exp1}. These results are included in the supplementary material. 

\subsection{Generative Modeling via Auto-Encoders}

We now demonstrate the application of the GSW and max-GSW distances in generative modeling. We specifically use the recently proposed Sliced-Wasserstein Auto-Encoder (SWAE) \cite{kolouri2018sliced} framework, which penalizes the distribution of the encoded data in the latent space of the auto-encoder to follow a prior samplable distribution, $p_Z$. More precisely, let $\{x_n\sim p_X\}_{n=1}^N$ be i.i.d. samples from $p_X$, $\phi(x,\gamma_\phi):\mathcal{X}\rightarrow \mathcal{Z}$ and $\psi(z,\gamma_\psi):\mathcal{Z}\rightarrow \mathcal{X}$ be the parametric encoder and decoder (e.g., CNNs) with parameters $\gamma_\phi$ and $\gamma_\psi$, respectively. Then SWAE's objective function \cite{kolouri2018sliced} is defined as:
\begin{equation}
    \min_{\gamma_\phi,\gamma_\psi} \mathbb{E}_x [ c(x,\psi(\phi(x,\gamma_\phi),\gamma_\psi))]+\lambda SW(p_{\phi(x,\gamma_\phi)},p_Z)
\label{eq:swae}
\end{equation}
where $\lambda$ is the regularizer coefficient for matching the encoded distribution to $p_Z$.  Here, we substitute the SW distance in Equation~\eqref{eq:swae} with GSW and max-GSW distances. Specifically, we encode the MNIST dataset \cite{lecun1998gradient} into the encoder's latent space and enforce the distribution of the embedded data to follow a specific prior distribution, e.g. the Swiss Roll distribution as shown in Figure \ref{fig:schematic}, while we simultaneously enforce the encoded features to be decodable to the original input images.
\begin{figure}[t]
    \centering
    \includegraphics[width=\columnwidth]{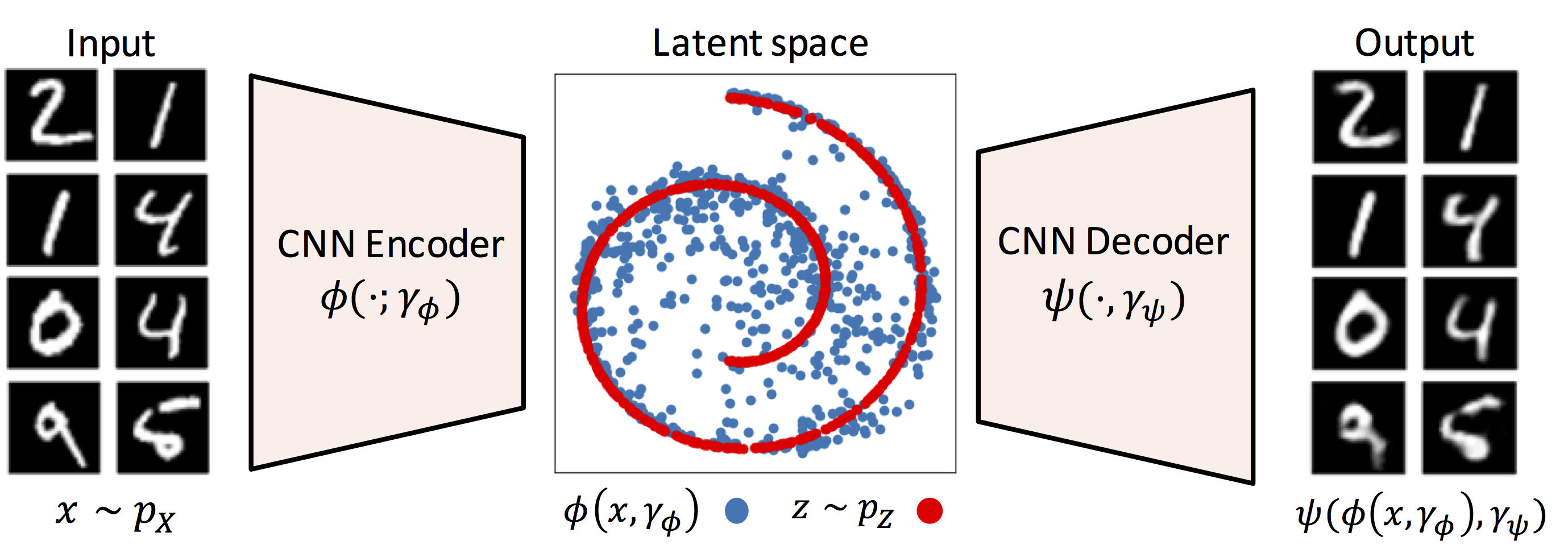}
    \caption{The SWAE architecture. The embedded data in the latent space is enforced to follow a prior samplable distribution $p_Z$.}
    \label{fig:schematic}
    \vspace{-.15in}
\end{figure}

We ran the optimization in Equation~\eqref{eq:swae} with GSW distances, which we denote as GSWAE, with linear, circular, and homogeneous polynomial of degree 3. At each iteration, we measured the 2-Wasserstein distance between the embedded distribution and the prior distribution, $W_2(p_{\phi(x,\gamma_\phi)},p_Z)$, and also between the input distribution and the distribution of the reconstructed samples, $W_2(p_{\psi(\phi(x,\gamma_\phi),\gamma_\psi)},p_X)$. Each experiment was repeated $50$ times and the average 2-Wasserstein distances are reported in Figure \ref{fig:exp2}. The middle row in Figure \ref{fig:exp2} shows samples from $p_Z$ and $\phi(x,\gamma_\phi)$ for $x\sim p_X$, and the last row shows decoded random samples,  $\psi(z,\gamma_\psi)$ for $z\sim p_Z$. Similar to the previous experiment, we see that the GSWAE with a polynomial defining function, captures the nonlinear geometry of the input samples better.

We also compare the performance of GSWAE and Max-GSWAE with those of SWAE and WAE-GAN \cite{tolstikhin2018wasserstein}. In particular, we use the improved Wasserstein-GAN \cite{gulrajani2017improved}, which is among the state-of-the-art adversarial training methods, in the embedding space of the Wasserstein auto-encoder \cite{tolstikhin2018wasserstein}. The adversary was chosen to be a multi-layer perceptron. Similar to the previous experiments, we measured the 2-Wasserstein distance between the input and output distributions as well as the latent and prior distributions. Each experiment was repeated $10$ times, and the average 2-Wasserstein distances are reported in Figure \ref{fig:exp3}. It can be seen that, while WAE-GAN provides a better matching of distributions in the latent space, the results of max-GSWAE distances are on par with the WAE-GAN. In addition, by comparing the distance between input and output distributions of the auto-encoder, it seems that max-GSWAE provides a better objective function to train such networks. 
\begin{figure}[t]
    \centering
    \includegraphics[width=\columnwidth]{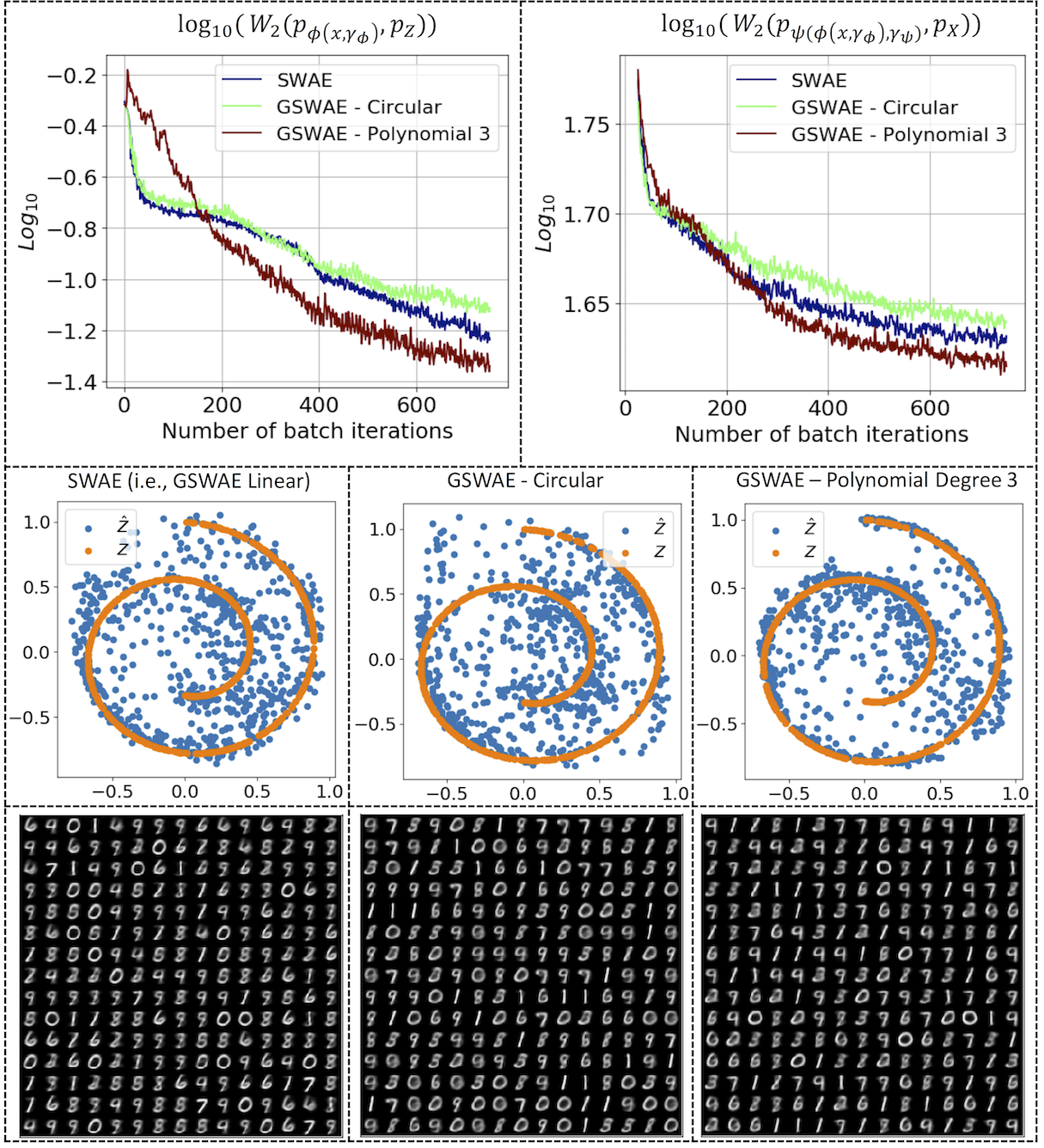}
    \vspace{-.2in}
    \caption{$2$-Wasserstein distance between $p_Z$ and $p_{\phi(x,\gamma_\phi)}$ and between $p_X$ and $p_{\psi(\phi(x,\gamma_\phi),\gamma_\psi)}$ at different batch iterations for SWAE and GSWAE with circular and polynomial of degree 3 defining functions. }
    \label{fig:exp2}
     \vspace{-.2in}
\end{figure}
\begin{figure}[t]
    \centering
    \includegraphics[width=\columnwidth]{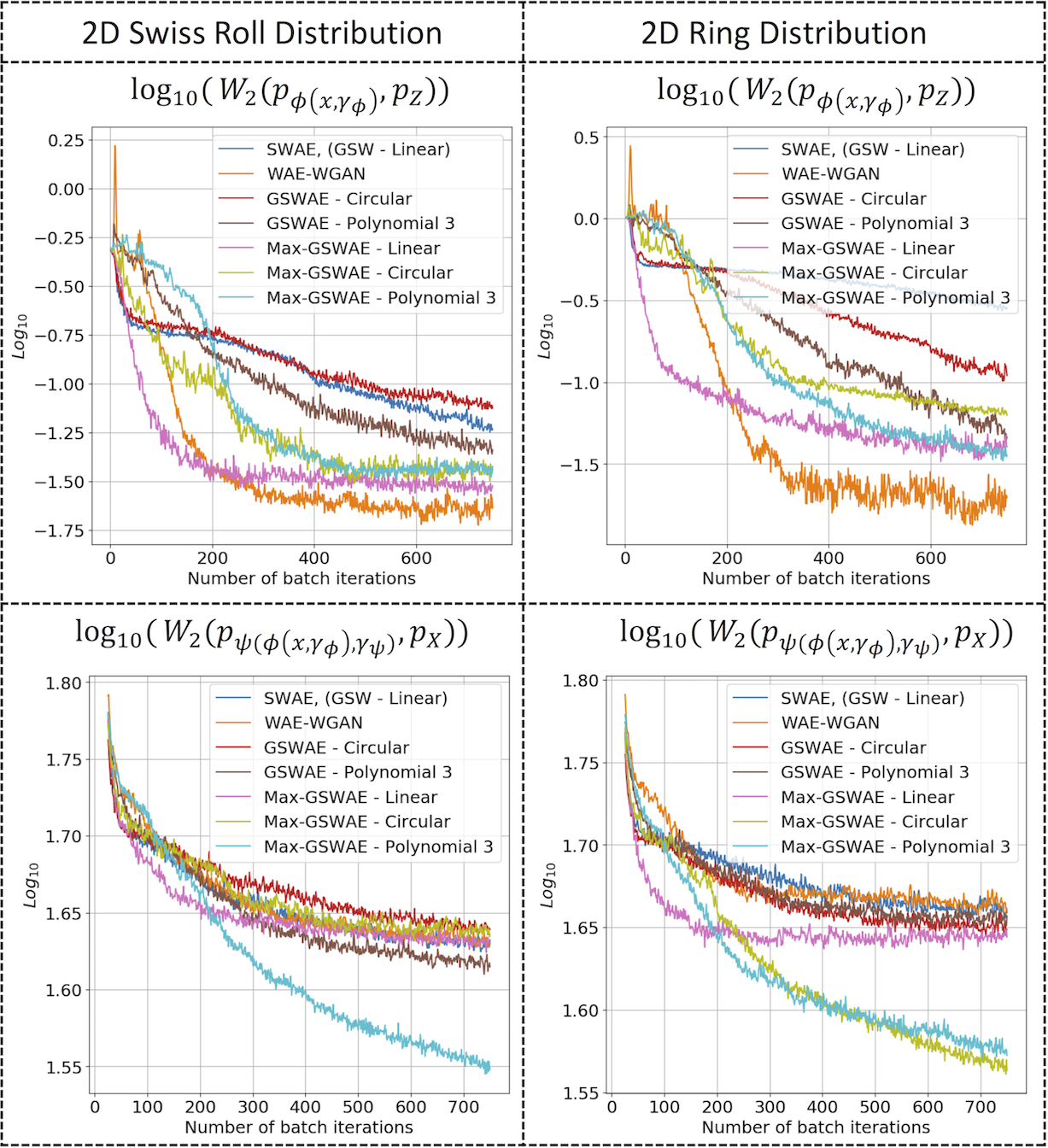}
    \vspace{-.2in}
    \caption{The $2$-Wasserstein distance between $p_Z$ and $p_{\phi(x,\gamma_\phi)}$ and between $p_X$ and $p_{\psi(\phi(x,\gamma_\phi),\gamma_\psi)}$ at different batch iterations for SWAE and WAE-GAN compared to GSWAE and Max-GSWAE with circular and polynomial of degree 3 defining functions. }
    \label{fig:exp3}
    \vspace{-.2in}
\end{figure}

\section{Conclusion}

We introduced a new family of optimal transport metrics for probability measures that generalizes the sliced-Wasserstein distance: while the latter is based on linear slicing of distributions, we propose to perform nonlinear slicing. We provided theoretical conditions that yield the generalized sliced-Wasserstein distance to be, indeed, a distance function, and we empirically demonstrated the superior performance of the GSW and max-GSW distances over the classical sliced-Wasserstein distance in various generative modeling applications. As future work, we plan to study the existing connection between adversarial training and max-GSW distances by showing the defining function for GRTs can be approximated with neural networks. 

\section{Acknowledgement}

This work was partially supported by the United States Air Force and DARPA under Contract No. FA8750-18-C-0103. Any opinions, findings and conclusions or recommendations expressed in this material are those of the author(s) and do not necessarily reflect the views of the United States Air Force and DARPA.

\clearpage
\bibliography{gsw}
\bibliographystyle{icml2019}

\newpage
\section{Supplementary material}

\section{Non-negativity and Symmetry of the GSW and max-GSW Distances}

We prove that the GSW and max-GSW distances satisfy non-negativity and symmetry, using the fact that the $p$-Wasserstein distance is known to be a proper distance function \cite{villani2008optimal}. Let $\mu$ and $\nu$ be in $\mathcal{P}_p(\Omega)$. 

\subsection{Non-negativity}

We use the non-negativity of the $p$-Wasserstein distance, $\textit{i.e.}$ $W_p(\mu, \nu) \geq 0$ for any $\mu$, $\nu$ in $\mathcal{P}_p(\Omega)$, to show that the GSW and max-GSW distances are non-negative as well:
\begin{align*}
    GSW_p(I_\mu,I_\nu) &= \left(\int_{\Omega_\theta} W^p_p\big(\mathcal{G} I_\mu(.,\theta),\mathcal{G} I_\nu(.,\theta)\big) d\theta\right)^{\frac{1}{p}}\\
    &\geq \left(\int_{\Omega_\theta} (0)^p d\theta\right)^{\frac{1}{p}}=0 \\
    & \\
    \text{max-}GSW_p(I_\mu, I_\nu) &= \max_{\theta \in \Omega_\theta} W_p\big(\mathcal{G}I_\mu(\cdot, \theta), \mathcal{G}I_\nu(\cdot, \theta)\big) \\
    &= W_p\big(\mathcal{G}I_\mu(\cdot, \theta^*), \mathcal{G}I_\nu(\cdot, \theta^*)\big) \\
    &\geq 0
\end{align*}
where $\theta^* = \argmax_{\theta \in \Omega_\theta} W_p(\mathcal{G}I_\mu(\cdot, \theta), \mathcal{G}I_\nu(\cdot, \theta))$.

\subsection{Symmetry}

Since the $p$-Wasserstein distance is symmetric, we have $W_p(\mu, \nu) = W_p(\nu, \mu)$. In particular, we can write for all $\theta \in \Omega_\theta$:
\begin{equation} \label{eq:all_symmetry}
    \; W_p(\mathcal{G}I_\mu(\cdot, \theta), \mathcal{G}I_\nu(\cdot, \theta)) = W_p(\mathcal{G}I_\nu(\cdot, \theta), \mathcal{G}I_\mu(\cdot, \theta))
\end{equation}
and,
\begin{equation} \label{eq:max_symmetry}
    \max_{\theta \in \Omega_\theta} W_p(\mathcal{G}I_\mu(\cdot, \theta), \mathcal{G}I_\nu(\cdot, \theta)) = \max_{\theta \in \Omega_\theta} W_p(\mathcal{G}I_\nu(\cdot, \theta), \mathcal{G}I_\mu(\cdot, \theta))
\end{equation}

The symmetry of the GSW and max-GSW distances follows from Equations~\eqref{eq:all_symmetry} and \eqref{eq:max_symmetry} respectively. 




\begin{figure}[t]
    \centering
    \includegraphics[width=\columnwidth]{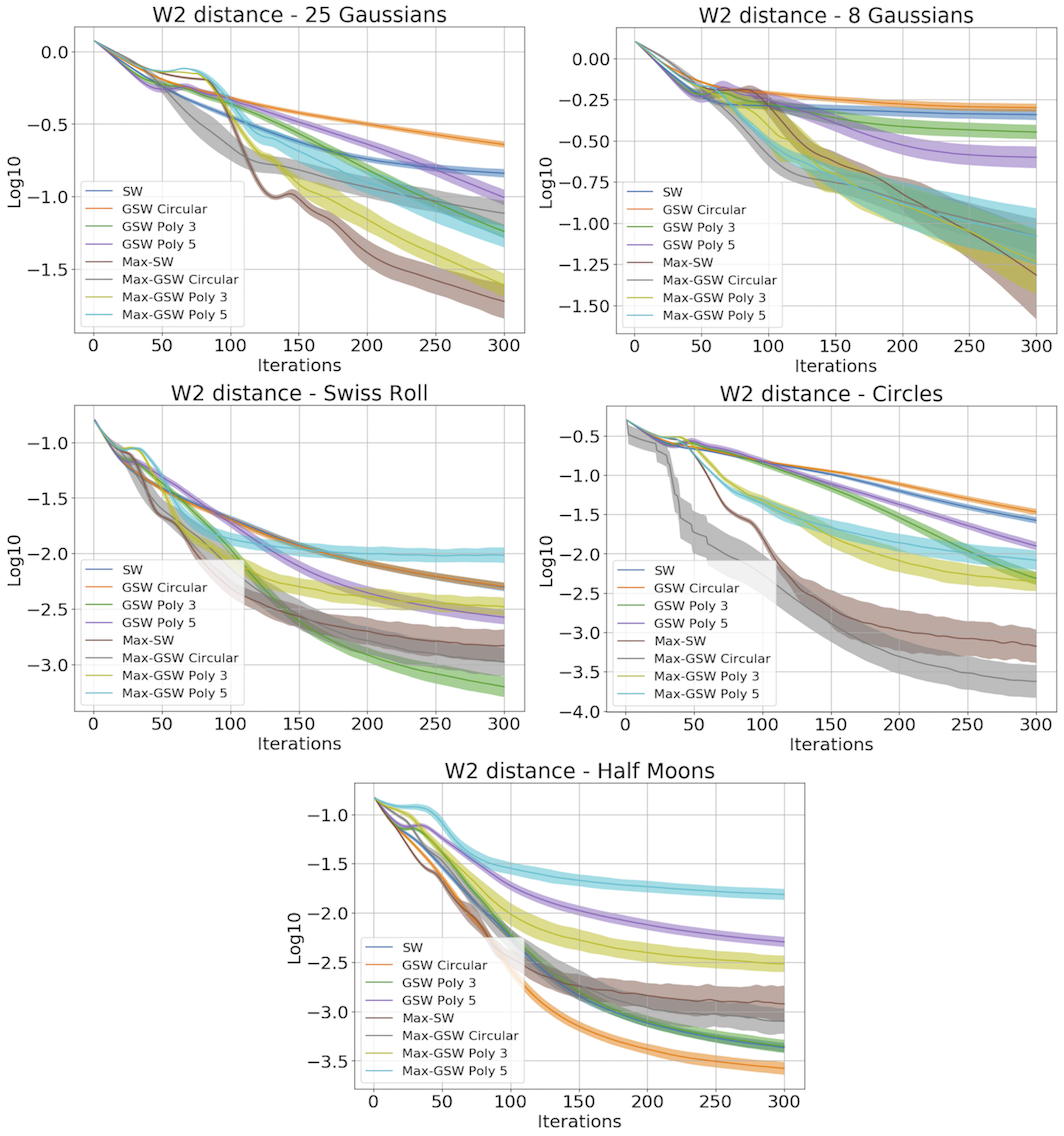}
    \caption{Log 2-Wasserstein distance between the source and target distributions as a function of the number of iterations for 5 classical target distributions using GSW and max-GSW distances.}
    \label{fig:supp_1}
\end{figure}

\section{Additional Experimental Results}


We include the results of maximum generalized sliced-Wasserstein flows on the five datasets used in the main paper, to accompany Figure 4 of our main paper: see Figure \ref{fig:supp_1}. It can be seen that the max-GSW distances, in the majority of cases, improve the performance of GSW. Here it should be noted that GSW distances are calculated based on 10 random projections, while max-GSW distances use only one projection by definition.

\section{Implementation Details} 

The PyTorch \cite{paszke2017automatic} implementation of our paper will be available here\footnote{https://github.com/.../GSW/}. Here we clarify some of the implementation details used in our paper. First, the `critic iteration' for the adversarial training, and the projection maximization for the max-GSW distances, were set to be equal to $50$. For all optimizations, we used ADAM \cite{kingma2014adam} optimizer with learning rate $lr=0.001$ and PyTorch's default momentum parameters. 

We used $3\times 3$ convolutional filters in both encoder and decoder architectures.
Encoder architecture:
\begin{eqnarray*}
x\in \mathbb{R}^{28\times 28} &\rightarrow& Conv_{16} \rightarrow LeakyReLU_{0.2}\\
&\rightarrow& Conv_{16} \rightarrow LeakyReLU_{0.2}\\
&\rightarrow& AvgPool_2\\
&\rightarrow& Conv_{32} \rightarrow LeakyReLU_{0.2}\\
&\rightarrow& Conv_{32} \rightarrow LeakyReLU_{0.2}\\
&\rightarrow& AvgPool_2\\
&\rightarrow& Conv_{64} \rightarrow LeakyReLU_{0.2}\\
&\rightarrow& Conv_{64} \rightarrow LeakyReLU_{0.2}\\
&\rightarrow& AvgPool_2 \rightarrow Flatten\\
&\rightarrow& FC_{128} \rightarrow LeakyReLU_{0.2}\\ 
&\rightarrow& FC_{2}
\end{eqnarray*}

Decoder architecture:
\begin{eqnarray*}
z\in \mathbb{R}^{2} &\rightarrow& FC_{128} \rightarrow LeakyReLU_{0.2}\\
&\rightarrow& FC_{1024} \rightarrow LeakyReLU_{0.2}\\
&\rightarrow& Reshape(4\times 4\times 64) \rightarrow Upsample_2\\
&\rightarrow& Conv_{64} \rightarrow LeakyReLU_{0.2}\\
&\rightarrow& Conv_{64} \rightarrow LeakyReLU_{0.2}\\
&\rightarrow& Upsample_2\\
&\rightarrow& Conv_{32} \rightarrow LeakyReLU_{0.2}\\
&\rightarrow& Conv_{32} \rightarrow LeakyReLU_{0.2}\\
&\rightarrow& Upsample_2\\
&\rightarrow& Conv_{16} \rightarrow LeakyReLU_{0.2}\\
&\rightarrow& Conv_{1}
\end{eqnarray*}

The WGAN in WAE-GAN uses an adversary network. Adversary's architecture: 

\begin{eqnarray*}
z\in \mathbb{R}^{2} &\rightarrow& FC_{500} \rightarrow ReLU\\
&\rightarrow& FC_{500} \rightarrow ReLU\\
&\rightarrow& FC_{500} \rightarrow ReLU\\
&\rightarrow& FC_{1} \rightarrow ReLU
\end{eqnarray*}

\end{document}